\newcommand{\mat}[1]{\boldsymbol{\mathbf{#1}}} % this will give upright characters for arabic letters
\newcommand{\inR}[1]{\in \mathbb{R}^{#1}}
\newcommand{\K}{\mat{K}}
\newcommand{\Kxx}{\K_\text{X,X}}
\newcommand{\Ksi}{\mat{Q}\left(\mat{T}+\sigma^2 \I{M}\right)^{-1}\mat{Q}^T} % shifted inverse
\newcommand{\Q}{\mat{Q}}
\newcommand{\T}{\mat{T}}
\newcommand{\Ki}{[\K^\text{-1}]}
\newcommand{\alp}{\boldsymbol \alpha}
\newcommand{\I}[1]{\mat{I}_{#1}}
\newcommand{\x}{\mat{x}}
\newcommand{\y}{\mat{y}}
\newcommand{\V}{\mat{V}}
\newcommand{\W}{\mat{W}}
\newcommand{\0}{\mat{0}}
\newcommand{\order}[1]{$\mathcal{O}(#1)$}
\newcommand{\nystrom}{Nystr\"{o}m}
\newcommand{\dobib}{ % Define the command which allow bibs to be printed for subfiles
	%\FloatBarrier
    %\small % allowed to use small font if will exceed page limit!
	%\bibliographystyle{icml2017}
	\bibliographystyle{IEEEtranN}
	%\bibliography{Resources/core,Resources/peripheral}
	%\bibliography{Resources.lnk/core,Resources.lnk/peripheral}
	\bibliography{core,peripheral}
}
\newcommand{\conditionalBib}{\dobib}
\newcommand{\lesslines}{\looseness=-1} % Try one fewer line for the following paragraph
\setlist[itemize]{leftmargin=*}
\newcommand{\lessfigspace}{-7mm}
\begin{document}
\renewcommand{\conditionalBib}{} % undefine this when compiling the main document

\title{Exploiting Structure for Fast Kernel Learning}
\author{Trefor W. Evans \\ University of Toronto\\ \href{mailto:trefor.evans@mail.utoronto.ca}{\texttt{trefor.evans@mail.utoronto.ca}}
	\and
	Prasanth B. Nair\\ University of Toronto\\ \href{mailto:pbn@utias.utoronto.ca}{\texttt{pbn@utias.utoronto.ca}}}
\date{}

\maketitle

% Copyright Statement
% When submitting your final paper to a SIAM proceedings, it is requested that you include 
% the appropriate copyright in the footer of the paper.  The copyright added should be 
% consistent with the copyright selected on the copyright form submitted with the paper.
% Please note that "20XX" should be changed to the year of the meeting.

% Default Copyright Statement
\fancyfoot[R]{\footnotesize{\textbf{Copyright \textcopyright\ 2018 by SIAM\\
			Unauthorized reproduction of this article is prohibited}}}

% Depending on which copyright you agree to when you sign the copyright form, the copyright 
% can be changed to one of the following after commenting out the default copyright statement
% above.

%\fancyfoot[R]{\footnotesize{\textbf{Copyright \textcopyright\ 20XX\\
%Copyright for this paper is retained by authors}}}

%\fancyfoot[R]{\footnotesize{\textbf{Copyright \textcopyright\ 20XX\\
%Copyright retained by principal author's organization}}}
\begin{abstract}
We propose two methods for exact Gaussian process (GP) inference and learning on massive image, video, spatial-temporal, or multi-output datasets 
with missing values (or ``gaps'') in the observed responses.
The first method ignores the gaps using sparse selection matrices and 
a highly effective low-rank preconditioner is introduced to accelerate computations.
The second method introduces a novel approach to GP training whereby response values are inferred on the gaps \emph{before} explicitly training the model.
We find this second approach to be greatly advantageous for the class of problems considered.
Both of these novel approaches make extensive use of Kronecker matrix algebra to design massively scalable algorithms which have low memory requirements. 
We demonstrate exact GP inference for a spatial-temporal climate modelling problem with 3.7 million training points as well as a video reconstruction problem with 1 billion points.
\end{abstract}

\section{Introduction}
We introduce techniques to perform \emph{exact} Gaussian process (GP) training and inference on massive datasets by exploiting a structure that is present in many problems.
GP modelling is a powerful non-parametric framework for performing classification and regression, however,
exact GPs are typically restricted to small datasets since they require
\order{N^3} time and \order{N^2} storage where $N$ is the number of training points~\citep{rasmussen_gpml}.
This has motivated a considerable amount of work on scalable \emph{approximate} GP methods that can be applied to large-scale datasets
\citep{smola_sor,snelson_fitc,williams_nystrom,wilson_kiss,quin_ssgp} and even though significant progress has been
made on this topic, current methods
cannot generally achieve significant gains in scalability without a noticeable deterioration in accuracy.

\citet{saatci_phd} introduced scalable GP modelling techniques for datasets whose inputs are distributed
on a full Cartesian tensor product grid to
leverage efficient Kronecker matrix algebra~\cite{van_loan_kron}.
Training datasets structured in this particular form 
arise in many important applications including the analysis of
images, videos, spatial-temporal fields, sensor networks, or multi-output processes~\cite{osborne_sensor_networks, lawrence_multi_output_kernels}.
In most of these applications, there will be gaps in the training dataset
which may be caused by
missing observations,
presence of obstructions or irregular domain boundaries, or
data corruption~\citep{gunes_gappy, wilson_gpatt_nips}. 
Unfortunately, the efficient Kronecker matrix algebra used by \citet{saatci_phd} can no longer be used in the presence of these gaps.

\citet{wilson_gpatt_nips} introduce an extension to deal with gaps in structured data using a penalty method which works well
provided a suitable choice is made for a free penalty parameter.
In the present work, we provide alternative formulations that eliminate the need for a free parameter and
we demonstrate significant empirical speed improvements, particularly on massive datasets. 

We will consider datasets with $N$ training points which form a subset of $M$ points on a full Cartesian product grid.
Also, we denote the number of missing points (or gaps) from the full grid to be $L {=} M{-}N$.
Two GP formulations are developed which enable fast training and inference on a dataset with this partial grid structure.
While we restrict our discussion to GP models, the methods proposed in this paper can be readily applied to other kernel methods such as regularization networks or support vector machines~\cite{evgeniou}.
Here we summarize the main contributions of the paper:
%The main contributions of this paper are summarized:
\begin{itemize}
\item The proposed algorithms perform exact GP training and inference in
\order{dM^\frac{d+1}{d} {+} N} or \order{dM^\frac{d+1}{d} {+} L} time,
and \order{dM^\frac{2}{d}} storage; a significant improvement over standard GP models.
We also demonstrate marked improvements in both speed and robustness of the
proposed algorithms in comparison to
the ``penalty'' method introduced by \citet{wilson_gpatt_nips}.
\item A novel covariance matrix preconditioner is derived which we show significantly accelerates convergence. 
Direct application of this preconditioner to structured kernel interpolation (SKI) \cite{wilson_kiss} is also discussed.
%
%\item We consider problems where more than 99.9\% of the training data are \emph{not} on a grid and demonstrate that, even in these edge cases, our formulations are at least as fast as standard GP techniques during training while being significantly faster at reconstructing the full grid.
%
\item We take a new approach to GP training where we infer the posterior mean on the gaps before training is complete.
We show that this technique is greatly advantageous for the considered class of problems.
\item For fast predictions, we demonstrate how the posterior mean can be efficiently computed on a grid and
we discuss how to quickly compute the posterior covariance exactly, or approximately.
\item Finally, we show that our methods are highly scalable and accurate; we train exact GPs on a massive video reconstruction problem and
we introduce a powerful approach to climate analysis in 
the reconstruction of daily temperatures at 291 Ontario weather stations over 56 years.
The largest problem contains over 1 billion points; to the best of our knowledge exact GP inference has not been attempted before on this scale.
\end{itemize}

\section{Background}
\label{sec:background}
We consider the set of inputs on a full Cartesian product grid
$\mat{\mathcal{X}} {=} \{\mat{x}_i\}_{i=1}^M$ 
where $\mat{x}_i \inR{d}$ is the $d$-dimensional input vector of the $i^\text{th}$ of $M$ points on the full grid.
The $M$ corresponding responses are denoted
$\mat{y} {=} \{y_i\}_{i=1}^M$ and
we assume that some of these responses are missing from our dataset.
Points in $\mat{\mathcal{X}}$ with known responses will be differentiated from points with missing responses through two index sets;
the set $\text{X}$ contains the indices of the $N$ training points with \emph{known} responses on the grid, and 
the set $\text{Z}$ contains the indices of the \emph{missing} training points on the grid such that
$\text{X} \bigcup \text{Z} =  \{i\}_{i=1}^M$ is the indices of \emph{all} $M$ points on the full grid.
When we write index sets in the subscript, we refer to a partition, 
i.e. we can write $\alp \inR{M}$, and $\K \inR{M\times M}$ in partitioned form as
\begin{equation*}
\alp = 
\left[ \begin{array}{c}
\alp_\text{X}\\
\alp_\text{Z}
\end{array}\right]
\quad \quad
\K = 
\left[ \begin{array}{cc}
\K_\text{X,X} & \K_\text{X,Z}\\
\K_\text{Z,X} & \K_\text{Z,Z}
\end{array}\right].
\end{equation*}
We will employ Gaussian processes (GPs) as non-parametric prior distributions over the latent function that generated the training dataset. 
It is assumed that the dataset is corrupted by independent Gaussian noise with variance $\sigma^2$
and that the latent function is drawn from a Gaussian process with zero mean and covariance determined by the kernel $k$.
Using the notation defined above, the log marginal likelihood of the targets with known response, $\mat{y}_\text{X}$, can be written as~\cite{rasmussen_gpml}
\begin{multline} \label{eqn:likelihood}
\log \mathcal{P}(\mat{y}_\text{X} | \mat{\theta}, \sigma^2, \mat{\mathcal{X}}_\text{X}) = 
-\tfrac{1}{2} \log | \Kxx + \sigma^2 \I{N} |
-\\ \tfrac{1}{2} \mat{y}_\text{X}^T (\Kxx + \sigma^2 \I{N})^{-1} \mat{y}_\text{X} 
-\tfrac{N}{2} \log(2\pi),
\end{multline}
where 
$\mat{\mathcal{X}}_\text{X}$ is the set of $N$ training point input positions,
and
$\Kxx \inR{N \times N}$ is the kernel covariance matrix evaluated on the training dataset which is a partition of $\K \inR{M \times M}$, the covariance matrix evaluated on the full tensor product grid; $[\K]_{i,j} = k(\x_i,\x_j)$.
The kernel is parameterized by the hyperparameters, $\mat{\theta}$, which we estimate by maximizing the marginal likelihood.

After training, inference can be carried out at an untried point, $\mat{x}_* \inR{d}$, giving the posterior distribution
%\\ \vspace{-\baselineskip}
\begin{multline} \label{eqn:posterior}
y_*| \mat{\theta}, \sigma^2, \mat{\mathcal{X}}_\text{X}, \mat{x}_* \sim \mathcal{N}\left(\mathbb{E}[y_*],\ \mathbb{V}[y_*]\right),\\ 
\begin{split}
\mathbb{E}[y_*] &= \mat{g}_\text{X}^T (\Kxx + \sigma^2 \I{N})^{-1} \mat{y}_\text{X},\\
\mathbb{V}[y_*] &= k(\mat{x}_{*},\mat{x}_{*}) - \mat{g}_\text{X}^T(\Kxx + \sigma^2 \I{N})^{-1}\mat{g}_\text{X},
\end{split}
\end{multline} 
where 
$y_* \inR{}$ is the test prediction and
$\mat{g} \inR{M}$ is the cross-covariance vector between all points on the full training grid and the test point, 
$[\mat{g}]_i = k(\mat{x}_i, \mat{x}_*)$.

We focus on the three computationally demanding calculations required for GP training and inference; 
\emph{i)}~solving the linear system 
$\alp_\text{X} = (\Kxx + \sigma^2 \I{N})^{-1} \mat{y}_\text{X}$ to compute the log marginal likelihood and posterior mean; 
\emph{ii)}~solving the linear system $(\Kxx + \sigma^2 \I{N})^{-1}\mat{g}_\text{X}$ for each test point to compute the posterior covariance;~and
\emph{iii)}~computing $\log | \Kxx + \sigma^2 \I{N} |$ for the log~likelihood.

We will consider a covariance kernel that obeys the product correlation rule (as many popular multidimensional kernels do), i.e.
$k(\x_i,\x_j) = \prod_{l=1}^{d} k_l(x_{il},x_{jl})$, 
in which case the covariance between points on a full tensor product grid inherits a Kronecker product form; 
$\K = \bigotimes_{l=1}^{d} \K_l$, 
where $\K_l \inR{m_l \times m_l}$ is a one-dimensional kernel covariance matrix along a slice of the $l^\text{th}$ input dimension, and $m_l$ is the number of points along the $l^\text{th}$ input dimension on the Cartesian product grid%
\footnote{We can also relax the dataset structure requirements by assuming that \emph{groups} of inputs form a grid. See the \S\ref{sec:climate} study.}~\cite{saatci_phd}.
%In \cref{sec:kron_cov} of the supplementary material we provide less stringent requirements on the data and kernel structure where only some of inputs form a grid, however, for simplicity we will move forward assuming a grid is formed by all input dimensions.
Exploiting this Kronecker product structure, we find that  
only $\mathcal{O}(dN^{\frac{2}{d}})$ storage is required, and
a matrix-vector product with $\K$ requires only $\mathcal{O}(dN^{\frac{d+1}{d}})$ time%
\footnote{It is assumed that there are 
$m_1{=}m_2{=}\dots{=}m_d{=}\sqrt[\leftroot{0}\uproot{0}d]{M}$ points along each dimension  of the tensor product grid. 
Additionally, we refer to \cite{saatci_phd} for an efficient matrix-vector product algorithm.}.
However, when there are missing responses from the full grid, the Kronecker product structure is broken and $\Kxx \inR{N\times N}$ becomes a large dense matrix with no structure. 
As a result, GP modelling storage and time increase to the nominal complexities of \order{N^2} and \order{N^3}, respectively.
We shall next discuss some novel approaches for addressing this computational challenge.

\section{Linear Algebra aspects of GP Training}
\label{sec:weight_vec}
\lesslines
Here we propose new algorithms to compute a matrix-vector product with the covariance matrix inverse, 
$(\Kxx + \sigma^2 \I{N})^{-1}$,
which is the most computationally demanding component of performing exact GP regression.
We first review the existing ``penalize-gaps'' (PG) formulation of \citet{wilson_gpatt_nips} which requires solving an unstructured linear system of equations of size $M \times M$ (number of points on full grid).
We then proceed to develop two alternative formulations; 
the ``ignore-gaps'' (IG) method which reduces the size to $N \times N$ (number of training points); and
the ``fill-gaps'' (FG) method which only requires solving an unstructured linear system of size $L \times L$ (number of gaps).
%Besides reducing the linear system size, these two methods also eliminate the free penalty parameter present in the original penalize-gaps (PG) formulation. 
We complete this section by developing a preconditioner for the two new formulations~presented.

\subsection{Gap Penalization Strategy}
\label{sec:PG}
\citet{wilson_gpatt_nips} first approached this problem when using GPs to model observations on a spatial-temporal Cartesian product grid.
The central idea of their solution was to fill in the gaps in the
observations with arbitrary values and subsequently use a penalty approach to ensure that these pseudo-observations do not influence the final model. 
This may appear to be somewhat counter-intuitive since they are
essentially increasing the number of observations, however, this step allows fast matrix-vector products to be made with the $M \times M$ covariance matrix $\K$ which has
a Kronecker product structure.
\begin{proposition}[Penalize-Gaps, PG]
The vector $\alp \in \mathbb{R}^M$ obtained from the numerical solution of
the penalized $M\times M$ system of equations 
\begin{equation}
\label{eqn:penalty}
%\left ( \K + \gamma {\bf R}\right) \alp = \y
\left( \K + \gamma {\bf R} + \sigma^2 \I{M} \right) \boldsymbol{\alpha} = {\bf y}
\end{equation}
satisfies $(\Kxx+\sigma^2 \I{N})\alp_\text{X} = \y_\text{X}$ in the limit of the penalty
parameter $\gamma \rightarrow \infty$, where 
$\K \inR{M \times M}$ is the kernel covariance matrix on the full product grid,
${\bf R} \in \mathbb{R}^{M\times M}$ is an all zero matrix except
$\mat{R}_\text{X,X} = \I{N}$, and arbitrary numerical values are
inserted in the missing entries of $\y \in \mathbb{R}^M$.
\end{proposition}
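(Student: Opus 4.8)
The plan is to use the block partition induced by the index sets $X$ (known) and $Z$ (gaps) and then pass to the limit $\gamma\to\infty$ through a Schur complement. First I would write each object in the partitioned form introduced above. Since $\R$ is supported only on the diagonal block indexed by the pseudo-observations sitting on the gaps, the penalty contributes $\gamma\I{L}$ to that block alone, so \eqref{eqn:penalty} becomes the coupled pair
\begin{align*}
(\Kxx+\sigma^2\I{N})\,\alp_\text{X}+\K_\text{X,Z}\,\alp_\text{Z}&=\y_\text{X},\\
\K_\text{Z,X}\,\alp_\text{X}+\B_\gamma\,\alp_\text{Z}&=\y_\text{Z},
\end{align*}
where $\B_\gamma:=\K_\text{Z,Z}+(\sigma^2{+}\gamma)\I{L}$. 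Because $\K\succeq\0$, $\gamma\R\succeq\0$ and $\sigma^2\I{M}\succ\0$, the full coefficient matrix is symmetric positive definite for every finite $\gamma\ge0$; hence $\alp=\alp(\gamma)$ is uniquely defined and it suffices to study its limit.

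Next I would eliminate the gap block. As $\B_\gamma\succeq(\sigma^2{+}\gamma)\I{L}\succ\0$ it is invertible, so the second equation gives $\alp_\text{Z}=\B_\gamma^{-1}(\y_\text{Z}-\K_\text{Z,X}\alp_\text{X})$, and substitution into the first yields the reduced $N\times N$ system $\mat{S}_\gamma\,\alp_\text{X}=\y_\text{X}-\K_\text{X,Z}\B_\gamma^{-1}\y_\text{Z}$ with Schur complement
\begin{equation*}
\mat{S}_\gamma:=\Kxx+\sigma^2\I{N}-\K_\text{X,Z}\,\B_\gamma^{-1}\,\K_\text{Z,X}.
\end{equation*}
The key estimate is $\lVert\B_\gamma^{-1}\rVert\le(\sigma^2{+}\gamma)^{-1}$, which follows from $\K_\text{Z,Z}\succeq\0$; consequently both correction terms $\K_\text{X,Z}\B_\gamma^{-1}\K_\text{Z,X}$ and $\K_\text{X,Z}\B_\gamma^{-1}\y_\text{Z}$ vanish as $\gamma\to\infty$.

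Finally I would take the limit. Then $\mat{S}_\gamma\to\Kxx+\sigma^2\I{N}$, which is SPD and therefore invertible, and the right-hand side tends to $\y_\text{X}$. By continuity of matrix inversion near an invertible limit, $\alp_\text{X}(\gamma)\to(\Kxx+\sigma^2\I{N})^{-1}\y_\text{X}$, i.e.\ $(\Kxx+\sigma^2\I{N})\alp_\text{X}=\y_\text{X}$ in the limit; since $\y_\text{Z}$ enters only through the vanishing factor $\B_\gamma^{-1}$, the arbitrary fill values are immaterial, as claimed.

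The main obstacle is the passage to the limit itself: formally setting $\gamma=\infty$ and $\alp_\text{Z}=\0$ is not a proof, because one must rule out $\alp_\text{X}(\gamma)$ drifting off as the problem becomes ill-conditioned in the $Z$-directions. The Schur-complement reduction handles this cleanly---once $\gamma$ is large enough that $\lVert\K_\text{X,Z}\B_\gamma^{-1}\K_\text{Z,X}\rVert<\sigma^2/2$ we have the uniform bound $\mat{S}_\gamma\succeq\tfrac{\sigma^2}{2}\I{N}$, so $\mat{S}_\gamma^{-1}$ stays bounded and converges---after which the remaining block algebra is routine.
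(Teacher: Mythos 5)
The paper itself offers no proof of this proposition---it defers entirely to the supplementary material of \citet{wilson_gpatt_nips}---so there is no internal argument to compare against; your self-contained Schur-complement proof is a genuine addition. The argument is correct and handles the one point where a hand-wave would be inadequate: rather than formally setting $\alp_\text{Z}=\0$ at $\gamma=\infty$, you eliminate $\alp_\text{Z}$ exactly, use $\K_\text{Z,Z}\succeq\0$ to get $\lVert\B_\gamma^{-1}\rVert\le(\sigma^2{+}\gamma)^{-1}$, and then pass to the limit through a Schur complement that stays uniformly positive definite, so $\alp_\text{X}(\gamma)$ provably converges to $(\Kxx+\sigma^2\I{N})^{-1}\y_\text{X}$ and the arbitrary fill values $\y_\text{Z}$ drop out. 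This is exactly the structure of the argument in Wilson et al.'s supplement (infinite noise on the pseudo-observations), made quantitative.

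One issue you should not leave implicit: your very first step asserts that $\R$ is supported on the gap block, contributing $\gamma\I{L}$ to the Z,Z block, whereas the proposition as printed defines $\R$ to be zero except $\R_\text{X,X}=\I{N}$, i.e.\ the penalty sits on the \emph{observed} block. Under that literal reading the claim is false: the X-row of the partitioned system becomes $\big(\Kxx+(\gamma{+}\sigma^2)\I{N}\big)\alp_\text{X}+\K_\text{X,Z}\alp_\text{Z}=\y_\text{X}$, which forces $\alp_\text{X}=\mathcal{O}(1/\gamma)\to\0$, so $(\Kxx+\sigma^2\I{N})\alp_\text{X}\to\0\neq\y_\text{X}$ in general. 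The statement is evidently a typo in the paper (it should read $\R_\text{Z,Z}=\I{L}$, matching Wilson et al.'s construction in which the \emph{missing} entries receive noise tending to infinity, and matching the suggested preconditioner $(\gamma\R+\sigma^2\I{M})^{-1/2}$, which is only useful if the large entries sit on the gaps). Your proof therefore establishes the intended, true statement, but a complete write-up should say explicitly that you are correcting the definition of $\R$ rather than silently substituting a different one.
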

A proof is provided in the supplementary material of \cite{wilson_gpatt_nips}.
A  linear conjugate gradient (CG) solver \citep{atkinson_txtbk} can be used to solve \cref{eqn:penalty} to take advantage of fast matrix-vector products in \order{dM^\frac{d+1}{d} {+} N} since
$\K {=} {\bigotimes_{i=1}^d} \K_i$ has a Kronecker product form.
The preconditioner 
$( \gamma {\bf R} {+} \sigma^2 \I{M} )^{-\frac{1}{2}}$ was suggested for this formulation \cite{wilson_gpatt_nips}.

Unfortunately, this method can suffer from numerical inaccuracies with a poor choice of $\gamma$, and it is not clear how large the penalty parameter $\gamma$ should be \emph{a priori}. 
Two novel approaches that do not suffer from this limitation are presented next.

\subsection{Selection Matrix Strategy}
\label{sec:IG}
\label{sec:ignore_gaps}
Using sparse selection matrices, we develop a technique to exploit Kronecker matrix algebra without the use of a penalty.
\begin{proposition}[Ignore-Gaps, IG]
\label{thm:ignore_gaps}
\label{thm:IG}
The solution of the linear algebraic system of equations 
$(\Kxx + \sigma^2 \I{N})\alp_\text{X} = \y_\text{X}$ 
also satisfies the system of equations 
\begin{align}
\W\left(\K + \sigma^2 \I{M}\right)\W^T \alp_\text{X} = \y_\text{X},
\end{align}
where $\W \inR{N \times M}$ is a sparse selection matrix with
one value per row set to unity in the column corresponding to each index in $\text{X}$ such that~$\W \K \W^T = \Kxx$.
\end{proposition}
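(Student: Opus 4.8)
\emph{Proof proposal.} The plan is to prove something slightly stronger than the stated implication: that the two coefficient matrices are \emph{equal}, so that the two linear systems are literally the same system and the coincidence of their solution sets is immediate. First I would expand the reduced operator by linearity,
\[
\W\left(\K + \sigma^2 \I{M}\right)\W^T = \W\K\W^T + \sigma^2\, \W\W^T .
\]
The first term is supplied by hypothesis, since the proposition constructs $\W$ precisely so that $\W\K\W^T = \Kxx$ (it extracts the $\text{X},\text{X}$ block of $\K$). It therefore remains only to identify the second term.

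The crux is the identity $\W\W^T = \I{N}$. I would argue this directly from the combinatorial structure of the selection matrix: each row of $\W$ has a single unit entry in the column indexed by the corresponding element of $\text{X}$, so the $i$-th row equals $\mat{e}_{j(i)}^T$, the transpose of a standard basis vector of $\mathbb{R}^M$, where $j(i)$ is the $i$-th index in $\text{X}$. Because the indices collected in $\text{X}$ are distinct, these rows are pairwise orthonormal, whence $[\W\W^T]_{i,k} = \mat{e}_{j(i)}^T \mat{e}_{j(k)} = \delta_{ik}$ and $\W\W^T = \I{N}$. Equivalently, $\W^T$ zero-pads an $N$-vector into $\mathbb{R}^M$ and $\W$ then extracts the $\text{X}$-entries, so extracting immediately after padding returns the original vector.

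Substituting both terms yields $\W(\K + \sigma^2 \I{M})\W^T = \Kxx + \sigma^2 \I{N}$, so the reduced system is \emph{identical} to $(\Kxx + \sigma^2 \I{N})\alp_\text{X} = \y_\text{X}$; in particular any $\alp_\text{X}$ solving the latter solves the former, as claimed (and, since these matrices agree and are positive definite for $\sigma^2 > 0$, the converse holds as well).

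There is no deep obstacle here, as the result is an elementary algebraic identity; the only point deserving genuine care is the orthonormality of the rows of $\W$, i.e. establishing $\W\W^T = \I{N}$ rather than merely a projection. By contrast $\W^T\W$ is the $M\times M$ diagonal projector onto the coordinates in $\text{X}$ and is \emph{not} the identity, so the order of the product matters; the $\sigma^2$ term survives unchanged precisely because it is $\W\W^T$ that appears.
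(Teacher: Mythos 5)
Your proof is correct, and it fleshes out exactly the argument the paper leaves implicit: the paper simply declares this proposition ``trivial'' without writing anything down, and the content of that triviality is precisely your two identities, $\W\K\W^T = \Kxx$ (by construction of $\W$) and $\W\W^T = \I{N}$ (orthonormality of the distinct selection rows), which together make the two coefficient matrices equal. Your closing remark distinguishing $\W\W^T = \I{N}$ from the projector $\W^T\W \neq \I{M}$ is the one genuinely careful point, and you got it right.
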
%
The proof of this formulation is trivial and it is evident that it also lends itself well to a conjugate gradient solver since it admits fast matrix-vector products in 
\order{dM^\frac{d+1}{d} {+} N} due to the form of  
 $\K {=} {\bigotimes_{i=1}^d} \K_i$. 
Evidently, this method does not require any user-defined parameters unlike the previous penalize-gaps method.

We observe that this covariance matrix structure coincides with that of ``structured kernel interpolation'' (SKI)~\cite{wilson_kiss} in the special case where training data coincides with the inducing point grid and so the SKI interpolation matrix becomes the sparse selection matrix $\mat{W}$.
We emphasize that while SKI typically uses an approximate kernel, if the inducing point grid and training data coincide, then the exact kernel is recovered during training.
At prediction time, the methods differ since the SKI kernel evaluated between train and test points becomes approximate whereas we continue to use the exact kernel.

\subsection{Fill Gaps Strategy}
\label{sec:FG}
\label{sec:fill_gaps}
While the previous two approaches determine $\alp_\text{X}$ directly, 
this approach first infers the posterior mean on the gaps, $\y_\text{Z}$, thus recovering a non-gappy problem upon which we can fully exploit Kronecker matrix algebra.

\begin{proposition}[Fill-Gaps, FG]
\label{thm:fill_gaps}
\label{thm:FG}
The solution of $(\K + \sigma^2 \I{M}) \alp = \y$ satisfies $(\Kxx + \sigma^2 \I{N}) \alp_\text{X} =  \y_\text{X}$ where 
the missing values, $\y_\text{Z}$, are determined by
\begin{multline} \label{eqn:FG}
%\V{\bf K}^{-1}\V^T \y_\text{Z}&= -\V{\bf K}^{-1}\W^T \y_\text{X},
 \V\Ksi\V^T \y_\text{Z}=\\-\V\Ksi\W^T \y_\text{X},
\end{multline}
where $\V \inR{L \times M}$ is a sparse selection matrix
that has
one value per row set to unity in the column corresponding to each index in $\text{Z}$, and
$\Q, \T \inR{M \times M}$ are unitary and diagonal matrices, respectively, formed from the eigendecomposition of $\K = \Q \T \Q^T$.
\end{proposition}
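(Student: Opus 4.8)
The plan is to recognize that \cref{eqn:FG} is nothing more than a compact statement of the single condition $\alp_\text{Z} = \0$ on the full-grid weight vector, and that this one condition is exactly what decouples the $\text{X}$-block of the full system into the reduced system we want. First I would note that because $\K = \Q\T\Q^T$ is the eigendecomposition, the matrix appearing throughout \cref{eqn:FG} is simply the full-grid inverse, $\Ksi = \left(\K + \sigma^2\I{M}\right)^{-1}$. I would also record the elementary properties of the two selection matrices: since the index sets $\text{X}$ and $\text{Z}$ partition $\{i\}_{i=1}^M$, the embeddings $\W^T$ and $\V^T$ are complementary, so that $\W^T\y_\text{X} + \V^T\y_\text{Z} = \y$ for the full right-hand side, while $\W\W^T = \I{N}$ and $\W\V^T = \0$.

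With these in hand, the equivalence of \cref{eqn:FG} to $\alp_\text{Z} = \0$ is immediate. Moving its right-hand side across and factoring out $\V\Ksi$ gives $\V\Ksi\left(\V^T\y_\text{Z} + \W^T\y_\text{X}\right) = \0$, and the complementarity identity collapses the bracket to the full vector $\y$. Hence \cref{eqn:FG} reads $\V\left(\K + \sigma^2\I{M}\right)^{-1}\y = \0$, i.e. $\V\alp = \alp_\text{Z} = \0$, where $\alp = \left(\K + \sigma^2\I{M}\right)^{-1}\y$ is precisely the solution of the full system in the proposition. In other words, choosing $\y_\text{Z}$ to satisfy \cref{eqn:FG} is equivalent to choosing it so that the gap-partition of the full-grid weight vector vanishes.

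It then remains to show that $\alp_\text{Z} = \0$ forces the $\text{X}$-block to reduce correctly. Because the gap entries of $\alp$ vanish, the weight vector embeds as $\alp = \W^T\alp_\text{X}$. Left-multiplying the full system $\left(\K + \sigma^2\I{M}\right)\alp = \y$ by $\W$ and substituting this expression, I would use the given identity $\W\K\W^T = \Kxx$ together with $\W\W^T = \I{N}$ to obtain $\W\left(\K + \sigma^2\I{M}\right)\W^T\alp_\text{X} = \left(\Kxx + \sigma^2\I{N}\right)\alp_\text{X}$, while the right-hand side reduces to $\W\y = \y_\text{X}$. This yields exactly $\left(\Kxx + \sigma^2\I{N}\right)\alp_\text{X} = \y_\text{X}$, as claimed.

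I expect no genuine analytical obstacle here; the argument is essentially bookkeeping with the selection matrices, and the only step meriting care is the interpretation of \cref{eqn:FG}. One should verify that it determines $\y_\text{Z}$ uniquely, which follows because $\V\Ksi\V^T$ is the $\text{Z}$-block of the symmetric positive-definite inverse $\Ksi$ and is therefore itself positive definite, hence invertible. It is also worth remarking that the second block of the full system then gives $\y_\text{Z} = \K_\text{Z,X}\alp_\text{X}$, which is exactly the GP posterior mean on the gaps; this is what justifies the ``fill-gaps'' terminology and the subsequent claim that the completed problem can be solved with full Kronecker structure.
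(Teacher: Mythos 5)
Your proof is correct and takes essentially the same route as the paper's: both hinge on identifying \cref{eqn:FG} as exactly the condition $\alp_\text{Z} = \0$ imposed on the partitioned full-grid system $\alp = (\K + \sigma^2 \I{M})^{-1}\y$, with the noise handled through the eigendecomposition identity $\Ksi = (\K+\sigma^2\I{M})^{-1}$. You additionally make explicit two points the paper leaves implicit---the reduction to $(\Kxx + \sigma^2\I{N})\alp_\text{X} = \y_\text{X}$ via left-multiplication by $\W$, and the unique solvability of \cref{eqn:FG} from the positive definiteness of $\V(\K+\sigma^2\I{M})^{-1}\V^T$---which only strengthens the argument.
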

\begin{proof}
First, considering the special case $\sigma^2 {=} 0$, observe that we can use
$\alp_\text{X} {=} \Kxx^{-1} \y_\text{X}$ and $\alp_\text{Z} {=} 0$ 
to infer the posterior mean of the missing values on the gaps, $\y_\text{Z}$, by $\y {=} \K \alp$.
Here we attempt to solve for $\y_\text{Z}$ directly without first computing $\alp_\text{X}$.
We start by writing $\alp {=} \K^{-1}\y$ in partitioned form, and impose $\alp_\text{Z} {=} \0$
\begin{equation*}
\left[
\begin{array}{ll}
\Ki_\text{X,X} & \Ki_\text{X,Z}\\
\Ki_\text{Z,X} & \Ki_\text{Z,Z}
\end{array} 
\right] \left[
\begin{array}{l}
\y_\text{X}\\
\y_\text{Z}
\end{array}
\right] = \left[
\begin{array}{c}
\alp_\text{X}\\
\0
\end{array}
\right],
\end{equation*}
where $\Ki_{(\ \cdot\ ,\ \cdot\ )}$ is a partition of $\K^{-1}$.
Rearranging the last row gives
$\Ki_\text{Z,Z}\,\y_\text{Z} = - \Ki_\text{Z,X}\, \y_\text{X}$,
and observing that 
$\V \K^{-1} \V^T = \Ki_\text{Z,Z}$ and 
$\V \K^{-1} \W^T = \Ki_\text{Z,X}$
gives the system of equations
\begin{align} \label{eqn:FG_no_noise}
\V{\bf K}^{-1}\V^T \y_\text{Z}&= -\V{\bf K}^{-1}\W^T \y_\text{X}.
\end{align}
Next, if we consider $\sigma^2 {>} 0$, then we require
a partition of $(\K+\sigma^2 \I{M})^{-1} $ instead of $\K^{-1}$.
We can write this as
$(\K+\sigma^2 \I{M})^{-1} {=} \mat{Q}\left(\mat{T}+\sigma^2 \I{M}\right)^{-1}\mat{Q}^T$
using the eigendecomposition $\K {=} \Q \T \Q^T$ defined in the proposition statement which only requires  \order{dM^\frac{3}{d}} time to compute using Kronecker matrix algebra.
Substituting this for $\K^{-1}$ in \cref{eqn:FG_no_noise} completes the proof. \hfill \ensuremath{{\Box}}
\end{proof}

Similar to the other methods, this formulation 
lends itself well to a conjugate gradient solver since it admits fast matrix-vector products in 
 \order{dM^\frac{d+1}{d} {+} L} due to the structure of $\K {=} {\bigotimes_{i=1}^d} \K_i$. 
The eigendecomposition of $\K {=} \Q \T \Q^T$ can be rapidly computed in \order{dM^\frac{3}{d}} time using Kronecker matrix algebra, and as a result, 
$\Q {=} {\bigotimes_{i=1}^d} \Q_i$ is also a Kronecker product matrix \cite{van_loan_kron}.
Lastly, since $\mat{T}+\sigma^2 \I{M}$ is diagonal, matrix-vector products with its inverse cost only \order{M} time.
We would like to emphasize that the time complexity of matrix-vector products are
{\em independent} of the number of training points, $N$. 
Additionally, we observe that since the system being solved by a CG method is $L \times L$, the number of iterations required is expected to be much less than $L$, the number of gaps. 
We therefore expect that this method would be extremely fast for massive data-sets where there are few gaps
but in \cref{sec:experiments} we find that it also outperforms other methods well outside of this regime.
Once the missing values $\y_\text{Z}$ are inferred, $\alp_\text{X}$ can be found by evaluating
$\alp {=} \mat{Q}\left(\mat{T}+\sigma^2 \I{M}\right)^{-1}\mat{Q}^T\y$, which requires only \order{dM^\frac{d+1}{d}} time.

\subsection{Preconditioning strategies}
\label{sec:precon}
Using the ignore-gaps technique outlined in \cref{sec:IG}, and the eigendecomposition of $\K=\Q\T\Q^T$, observe that we can write the kernel covariance matrix as $\Kxx {=}\W \Q \T \Q^T \W^T$.
We can also approximate $\Kxx$ using the $p$ largest eigenvalues and corresponding eigenvectors of $\K$ as
\begin{align*}
\Kxx \approx \widetilde{\K}_\text{X,X}
&= \W \Q \mat{S}_p^T  
     \big(\mat{S}_p \T \mat{S}_p^T \big)
    \mat{S}_p \Q^T \W^T\\
&= \W \Q \mat{S}_p^T   {\T}_p \mat{S}_p \Q^T \W^T,
\end{align*}
where $\mat{S}_p \inR{p \times M}$ is a sparse selection matrix whose $i^\text{th}$ row has one value set to unity in the column corresponding to the index of the $i^\text{th}$ largest eigenvalue of $\K$ on the diagonal of $\mat{T}$; and 
${\T}_p \inR{p \times p}$ is a subset of $\T$.
We can then use the matrix inversion lemma to invert $\widetilde{\K}_\text{X,X} + \sigma^2 \I{N}$ 
\begin{multline} \label{eqn:IG_precon}
(\widetilde{\K}_\text{X,X} + \sigma^2 \I{N})^{{-}1} 
 =  \frac{1}{\sigma^{2}} \Big[
   \I{N} {-}  \\ 
   \W \Q \mat{S}_p^T    
   \big(
   \sigma^2 \I{p} {+}
   \T_p
   \mat{S}_p \Q^T \W^T
   \W \Q \mat{S}_p^T    
   \big)^{{-}1}
   \T_p
   \mat{S}_p \Q^T \W^T
   \Big],
% THIS IS THE OTHER WAY IT CAN BE WRITTEN:
% \\ 
%&= \sigma^2^{-1}\left[
%   \I - 
%   \left(\W \Q \mat{S}_p^T    \right) 
%   \left[
%   \sigma^2 \T_p^{-1} +
%   \left(\mat{S}_p \Q^T \W^T\right)
%   \left(\W \Q \mat{S}_p^T    \right) 
%   \right]^{-1}
%   \left(\mat{S}_p \Q^T \W^T\right)
%   \right],
\end{multline}
which only requires the additional storage and
inversion of a matrix of size $p \times p$.
After the $p\times p$ matrix has been factorized, multiplications with this preconditioner cost only \order{dM^\frac{d+1}{d} {+} p^2} time.

\lesslines
The use of $\widetilde{\K}_\text{X,X}$ for matrix preconditioning was explored with notable empirical success in \cite{cutajar_preconditioning} where a sub-set of training data was used as ``inducing points'' giving $M<N$, as opposed to a super-set as it is here ($M>N$) which we expect improves performance.

In \cref{sec:ignore_gaps}, we observed the duality between the ignore-gaps technique and SKI \cite{wilson_kiss}.
We would further like to point out that the preconditioner \cref{eqn:IG_precon} can similarly be used in the SKI framework, however we will not study its  effectiveness in a general SKI setting. 

Considering now a preconditioner for the fill-gaps method of \cref{sec:FG}, 
it can be shown that the matrix on the left-hand side of \cref{eqn:FG} can be rewritten as
\begin{multline*}
 \V\Ksi\V^T  = \mat{J} + \zeta \I{L},\\
= \V \Q \left[ (\T + \sigma^2\I{M})^{-1} - \zeta \I{M}\right]\Q^T\V^T  + \zeta \I{L}
\end{multline*}
where we have applied a spectral shift of $\zeta \inR{}$ to the first term, which we call $\mat{J} \inR{L \times L}$.
We can now write a rank-$p$ approximation of $\mat{J}$ as
\begin{align*}
\mat{J} \approx \widetilde{\mat{J}} &=
     \V \Q \widebar{\mat{S}}_p^T
     \left(\widebar{\mat{S}}_p \left[ (\T {+} \sigma^2\I{M})^{-1} {-} \zeta \I{M}\right] \widebar{\mat{S}}_p^T \right)
    \widebar{\mat{S}}_p \Q^T \V^T\\
&= \V \Q \widebar{\mat{S}}_p^T     \widebar{\T}_p \widebar{\mat{S}}_p \Q^T \V^T,
\end{align*}
where
$\widebar{\T}_p \inR{p\times p}$ is a subset of $\left[ (\T {+} \sigma^2\I{M})^{-1} {-} \zeta \I{M}\right]$ containing the largest values on its diagonal, and
$\widebar{\mat{S}}_p \inR{p \times M}$ is a sparse selection matrix whose $i^\text{th}$ row has one non-zero value set to unity in the column corresponding to the index of the $i^\text{th}$ smallest eigenvalue of $\K$ on the diagonal of ${\mat{T}}$.
To ensure no singularities, we choose $0 < \zeta < (\lambda_p + \sigma^2)^{-1}$ where $\lambda_p \inR{}$ is the $p^\text{th}$ smallest eigenvalue of $\K$ (in practice we choose a value mid-range).
$\widetilde{\mat{J}} + \zeta \I{L}$ is then an approximation of the fill-gaps left-hand side matrix which we can cheaply invert to use as a preconditioner with the matrix inversion lemma
\begin{multline} \label{eqn:FG_precon}
(\widetilde{\mat{J}} + \zeta \I{L})^{{-}1} 
 = \zeta^{-1}\Big[
   \I{L} - \\
   \V \Q \widebar{\mat{S}}_p^T    
   \big(
   \zeta \I{p} +
   \widebar{\T}_p
   \widebar{\mat{S}}_p \Q^T \V^T
   \V \Q \widebar{\mat{S}}_p^T    
   \big)^{{-}1}
   \widebar{\T}_p
   \widebar{\mat{S}}_p \Q^T \V^T
   \Big],
\end{multline}
which similarly admits fast matrix vector products in \order{dM^\frac{d+1}{d} {+} p^2} time.
We will analyze the efficacy of both of these preconditioners in \cref{sec:precon_studies}.

\section{Model Selection \& Fast Inferencing}
\label{sec:inference}
\lesslines
To compute the log marginal likelihood in \cref{eqn:likelihood},
we need
$(\Kxx + \sigma^2 \I{N})^{-1} \y_\text{X}$, and $\log |\Kxx + \sigma^2 \I{N}|$.
We already discussed efficient ways to compute the former and we will use a \nystrom\ approximation for the latter, which is accurate for large~$N$~\cite{wilson_gpatt_nips}; 
\begin{equation*}
\log |\Kxx + \sigma^2 \I{N}| 
%= \sum \limits_{i=1}^N \log(\lambda^N_i + \sigma^2)
\approx \sum \limits_{i=1}^N \log \left(\tfrac{N}{M}\lambda_i + \sigma^2\right),
\end{equation*}
where 
%$\lambda^N_i$, $i {=} 1,\dots,N$ are the eigenvalues of $\Kxx$, and 
$\lambda_i$ is the $i^\text{th}$ largest eigenvalue of $\K {=} {\bigotimes_{j=1}^d} \K_j$ which can be rapidly computed in \order{dM^\frac{3}{d}} time due to its Kronecker product form \cite{van_loan_kron}.
We now have all the ingredients required to efficiently train our model; we can now compute and maximize the marginal likelihood with respect to the hyperparameters and compute $\alp_\text{X}$.
We shall next outline how to rapidly compute the posterior mean and covariance during inference.

We frequently need to infer test points distributed on a tensor product grid (e.g. for search or visualization).
Here we demonstrate how we can exploit Kronecker matrix algebra to evaluate the posterior mean of points on a grid extremely quickly.
From \cref{eqn:posterior}, the posterior mean for a single point is given by 
$\mat{g}_\text{X}^T\alp_\text{X}$ which is equivalent to 
${\mat{g}}^T \alp$, where $\alp_\text{X} = (\Kxx + \sigma^2 \I{N})^{-1} \mat{y}_\text{X}$ and $\alp_\text{Z} = \0$.
If it is sought to perform inference at many points then we can replace 
${\mat{g}}$ with 
${\mat{G}} \inR{M \times Q}$ which is formed by horizontally stacking the column vectors ${\mat{g}}$ for each of $Q$ distinct test points.
Further, if we take these test points to be on a Cartesian product grid then a Kronecker product structure is inherited,
${\mat{G}} {=} {\bigotimes_{i=1}^d} {\mat{G}}_i$, where ${\mat{G}}_i \inR{m \times q}$, and we take $Q {=} q^d$, $M {=} m^d$.
By recognizing and exploiting this structure,
the time required to compute the posterior mean at the $Q$ test points decreases from
\order{NQ} to \order{\sqrt[d]{Q}M {+} \sqrt[d]{M}Q}.
This can give significant computational advantages for large $Q$.

The posterior covariance computation poses a different problem, since, from \cref{eqn:posterior} it is evident that it requires the solution of an $N\times N$ system of equations for \emph{each} test point, $(\Kxx + \sigma^2 \I{N})^{-1}\mat{g}_\text{X}$.
We could compute this using any technique developed in \cref{sec:weight_vec}; 
extension of the penalize-gaps and ignore-gaps techniques for this problem is trivial, however, applying the fill-gaps formulation gives an interesting interpretation. 
What we are ``filling in'' here is $\mat{g}_\text{Z}$ which is actually the cross-covariance between gaps on the training grid and the test point such that the final solution is not influenced by the gaps.
Once $\mat{g}_\text{Z}$ is filled in, the fully structured problem can be solved rapidly to complete the posterior covariance computation.
Since this problem is effectively identical to the training problem, we refer to \cref{sec:experiments} for a comparison of the different computation methods, however, it must be considered that any time invested in formulating a preconditioner once can be used to perform inference at many test points.

We may alternatively consider an \emph{approximation} of the posterior covariance
where we make use of the matrix preconditioner developed in \cref{sec:precon} by replacing
$\Kxx$ with $\widetilde{\K}_\text{X,X}$ in \cref{eqn:posterior}.
This gives us a similar % (it is actually identical but I want to implicitly point out that we are still different since we exploit kronecker algebra)
posterior distribution as the \nystrom\ method for GP modelling studied by \citet{williams_nystrom} and would enable the posterior covariance to be computed in 
%If we use this approximation then we can simply take a single matrix-vector product with the approximate inverse for each test point to compute the posterior covariance in 
\order{dM^\frac{d+1}{d} {+} p^2} time per test point.
We will not consider this approximation in the experiments of \cref{sec:experiments} but will instead restrict our attention to exact GP inference.

\section{Experiments}
\label{sec:experiments}
\subsection{Stress Tests}
\label{sec:stress_tests}
\begin{figure*}[ht!]
	\centering
	\begin{subfigure}[b]{0.3\textwidth}
		\includegraphics[width=\textwidth]{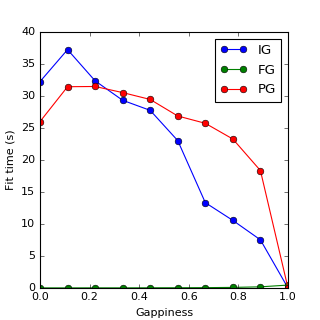}
		\caption{$M = 10,000$}
		\label{fig:10K}
	\end{subfigure}%
	\begin{subfigure}[b]{0.34\textwidth}
		\includegraphics[width=\textwidth]{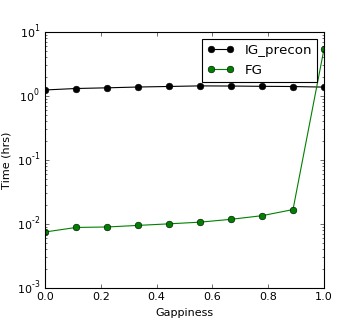}
		\caption{$M = 16,588,800$}
		\label{fig:10M}
	\end{subfigure}%
	\begin{subfigure}[b]{0.34\textwidth}
		\includegraphics[width=\textwidth]{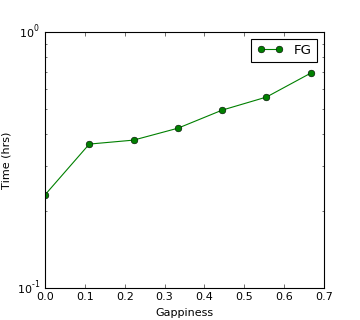}
		\caption{$M = 1,003,622,400$}
		\label{fig:1B}
	\end{subfigure}
	\caption{
		Reconstruction timings comparing the techniques outlined in \cref{sec:weight_vec} across a range of gappiness
		on various problem sizes, $M$.
		The three techniques are 
		FG~-~fill-gaps with no preconditioner (\cref{sec:FG}), 
		IG~-~ignore-gaps with no preconditioner (\cref{sec:IG}),
		IG\_precon~-~ignore-gaps with a rank-5000 preconditioner (\cref{sec:precon}) and
		PG~-~penalize-gaps with the preconditioner discussed in \cref{sec:PG}.
		%For all techniques, the relative CG convergence tolerance was set to $10^{-6}$ and 
		%the reconstruction accuracy was monitored to confirm that the constructed models were of equivalent quality.
		The timings include both the time to fill in the missing values, as well as compute $\K^{-1}\y = \alp$.
		\\ \vspace{\lessfigspace} \mbox{} % this is to decrease space left after the caption
	}
	\label{fig:gappiness}
\end{figure*}
\lesslines
We test the robustness of techniques outlined in
\cref{sec:weight_vec} for training a GP regression model on massive datasets of varying ``gappiness''${=}(M{-}N)/M{=}L/M$.
Results are shown in \cref{fig:gappiness} for gappiness sweeps across various grid sizes, $M$.
For the $M{=}10,000$ case, data was generated from the two-dimensional Rastrigin function \cite{rastrigin} and for the larger studies, we reconstruct a gappy 4K video of a resonating elastic membrane defined by the two-dimensional wave equation,
$
\tfrac{\partial^2 y}{\partial x_1^2} + \tfrac{\partial^2 y}{\partial x_2^2} {=}
\tfrac{\partial^2 y}{\partial x_3^2},
$
where $x_1, x_2 \inR{}$ are spatial coordinates, and
$x_3 \inR{}$ is time.
The membrane is constrained along the edges of the video frame and begins with random initial conditions.
Our GP models use the squared-exponential kernel, $k(\mat{x},\mat{z}) = \exp({-}||\mat{x}{-}\mat{z}||_2^2 / \theta_\ell^2)$, and hyperparameter tuning is performed beforehand on the fully structured dataset.
We do not consider hyperparameter selection on the gappy data, since the computation of $\alp$ alone allows us to contrast differences between the considered techniques.
Gaps are randomly applied to mask the training data and
$\alp$ is computed using a CG solver to a tolerance of $10^{-6}$.  
All experiments use the authors' code%
\footnote{\url{https://github.com/treforevans/gp_grid}}
on a machine with two E5-2680 v3 processors and 128GB RAM.

From \cref{fig:10K} we can see that 
the non-preconditioned ignore-gaps (IG) and the preconditioned
penalize-gaps (PG) algorithm performed comparably, however,
the fill-gaps (FG) method significantly outperformed the others across much of the gappiness spectrum.

In the larger, video reconstruction problem in \cref{fig:10M},
IG and PG are not shown, since after a wall-clock time
of 5 days, only the $N{=}1$ case had converged.
However, the FG method and the ignore-gaps algorithm with a
rank-5000 preconditioner (IG\_precon) completed reconstructions
across the full gappiness spectrum with each run taking $\lessapprox 1$ hour.
We emphasize that although $p{=}5000$ is four orders
of magnitude smaller than $M$, 
the preconditioner dramatically accelerated convergence.

Moving to the 1 billion point video reconstruction problem in
\cref{fig:1B}, the IG\_precon method was omitted since the time
required to form the preconditioner became prohibitive, however , 
the FG routine completed all reconstructions up to 70\% gappiness in
under 1 hour.

In figures \ref{fig:10K} \& \ref{fig:10M}, we observe that the fill-gaps technique is fastest where the gappiness is low, however, the ignore-gaps technique is only faster when the gappiness is surprisingly close to one;
in \cref{sec:precon_studies} we provide insight into why the FG method converges so quickly.
 
\subsection{Preconditioner Studies}
\label{sec:precon_studies}
\begin{figure*}[t!]
	\centering
	\begin{subfigure}[b]{0.33\textwidth}
		\includegraphics[width=\textwidth]{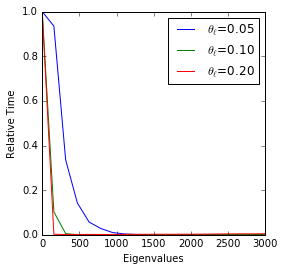}
		\caption{Ignore-Gaps CG Time}
		\label{fig:IG_precon}
	\end{subfigure}%
	\begin{subfigure}[b]{0.33\textwidth}
		\includegraphics[width=\textwidth]{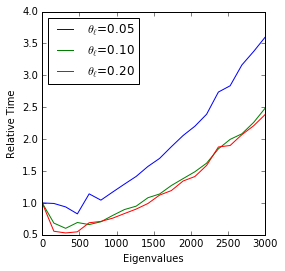}
		\caption{Fill-Gaps CG Time}
		\label{fig:FG_precon}
	\end{subfigure}%
	\begin{subfigure}[b]{0.33\textwidth}
		\includegraphics[width=\textwidth]{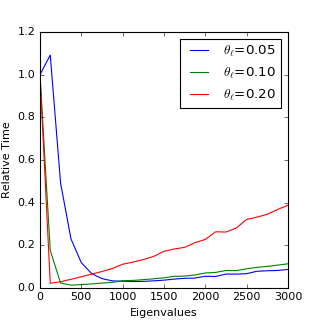}
		\caption{Ignore-Gaps Setup \& CG Time.}
		\label{fig:IG_incl_construction}
	\end{subfigure}
	\caption{
		Results of a preconditioner efficacy study. 
		We use a linear conjugate gradient solver to find $\alp_\text{X}$ using the preconditioner \cref{eqn:IG_precon} with varying values of the number of eigenvalues included, $p$, and kernel lengthscales, $\theta_\ell$.
		%The input data is structured on a two-dimensional grid with 100 points evenly spaced in the range $[0,1]$ along each dimension, giving $M = 10,000$.
		The timings in \cref{fig:IG_precon} and \cref{fig:FG_precon} only include the time required to solve the linear system using a CG solver while
		in \cref{fig:IG_incl_construction} the time required to construct the preconditioner is also included.
		All timings are presented relative to the time to perform the reconstruction with no preconditioner.
		\\ \vspace{\lessfigspace} \mbox{} % this is to decrease space left after the caption
	}
	\label{fig:preconditioner}
\end{figure*}
Here we assess the efficacy of the preconditioners outlined in \cref{sec:precon} for both the ignore-gaps and fill-gaps methods. 
For these tests, we take the training data to be structured on a two-dimensional grid with 100 points evenly spaced in the range $[0,1]$ along each dimension, giving $M = 10,000$.
We then randomly remove 50\% of the training points and sample responses as $\mat{y} \sim \mathcal{N}(\0, \I{M})$.
For our GP model,
we set $\sigma^2=10^{{-}6}$, and
use a squared-exponential kernel, 
considering a range of kernel lengthscales, $\theta_\ell$.
We then compute $\alp_\text{X} = (\Kxx + \sigma^2 \I{N})^{-1} \mat{y}_\text{X}$ using various values of preconditioner rank, $p$, and average over several samples of $\y$ and randomly applied gaps.
Results are shown in \cref{fig:preconditioner}.

Firstly, observe that the ignore-gaps scheme benefits dramatically through the use of a preconditioner (\cref{eqn:IG_precon}), which is consistent with the results of the stress tests.
\Cref{fig:IG_precon} shows the $\alp_\text{X}$ computation time \emph{excluding} the preconditioner setup time where it is evident that massive computational savings can be realized. 
This is representative of posterior covariance computations since we can reuse the preconditioner for many test points after an initial setup (discussed in \cref{sec:inference}).
The timings in \cref{fig:IG_incl_construction} \emph{include} setup time which is representative of using the preconditioner for training.
Here a broad minima is evident, indicating that an acceptable value of $p$ can be easily chosen.
We also see that as $\theta_\ell$ decreases, a greater number of eigenvalues, $p$, are required to achieve equivalent savings, as expected.

In \cref{fig:FG_precon} we analyze the fill-gaps preconditioner (\cref{eqn:FG_precon}) where it appears to only be effective for a very particular value of $p$, even when we exclude the preconditioner setup time.
If the setup time is included, we find that the method converges fastest with no preconditioner.
We expect this is because the eigenvalues of $(\K + \sigma^2 \I{M})^{-1}$ tend to decay slowly from the largest possible value of $\sigma^{-2}$ and so
the reduced-rank approximation used in the preconditioner is inaccurate.
We would also expect this eigenspectrum to make the non-preconditioned fill-gaps method particularly well suited to a CG solver, which is consistent with the fast convergence observed in the stress tests of \cref{sec:stress_tests}.

\subsection{Ontario Weather Stations}
\label{sec:climate}
\begin{figure*}[ht!]
	\includegraphics[width=\textwidth]{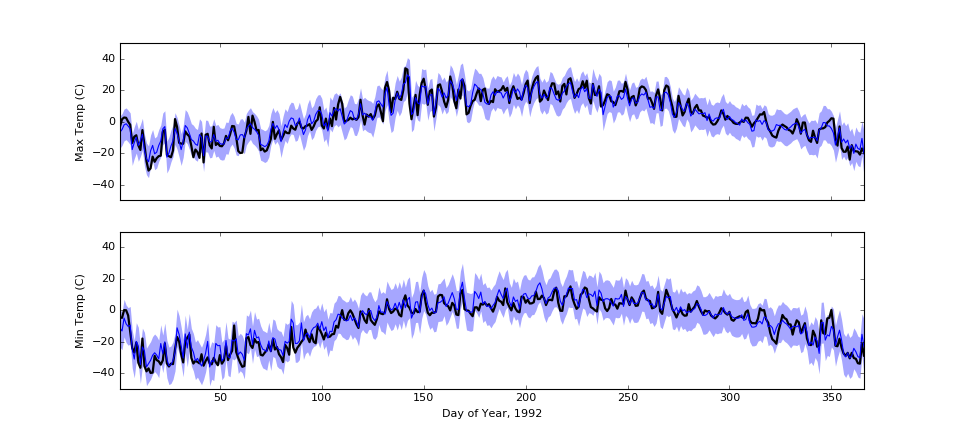}
	\caption{Reconstructed daily temperature observations for Moosonee in 1992.
		The black curves show the actual daily maximum (top) and minimum temperatures (bottom) which were both withheld from the model to compute the blue posterior distribution where the mean and three standard deviations (99.7\% confidence) are illustrated.
	\\ \vspace{\lessfigspace} \mbox{} % this is to decrease space left after the caption	
	}
	\label{fig:moosonee}
\end{figure*}
Here we demonstrate an efficient and powerful analysis of weather patterns at 291 weather stations across Ontario every day from 1950 to 2005.
The dataset~\cite{ontario_climate} contains many missing observations, mostly due to corrupted recordings or because some weather stations were not in operation for the entire period.
Reconstructing these observations are important for environmental studies and typically this is done by interpolating responses spatially for each day, independent of all other days, and all other responses~\cite{canada_weather_interp,ontario_climate}.
These dependencies are ignored because the problem size becomes prohibitive when temporal or multi-output correlations are considered.
We identify structure in this problem, allowing us to consider correlations within space, time, and between responses to allow forecasting into the future with an accurate posterior distribution.
This ability to forecast is not possible with existing techniques.

We use a kernel for our multi-output GP which admits the following non-gappy covariance matrix;
\begin{align*}
\K = \K_\text{year} \otimes \K_\text{day} \otimes \K_\text{space} \otimes \mat{B},
\end{align*}
where 
$\K_\text{year} \inR{56 \times 56}$ is the covariance between the integer year of observations ($\in \{1950,\dots,2005\}$),
$\K_\text{day}  \inR{366 \times 366}$ is the covariance between the day of year of observations ($\in \{1,\dots,366\}$),
$\K_\text{space}  \inR{291\times 291}$ is the covariance between the location of the weather stations ($\inR{3},\ \{\text{latitude, longitude, elevation}\}$), and
$\mat{B} \inR{2\times2}$ is the covariance between the daily maximum and minimum temperature, which are the two observations we are modelling each day (see \cite{lawrence_multi_output_kernels} for multi-output kernel details).
We also apply a periodic transformation to the day-of-year kernel with a period of 365.25 days~\cite{osborne_gp_timeseries}.
%; $x_\text{day} \rightarrow \{\cos(2\pi x_\text{day}/365.25),\sin(2\pi x_\text{day}/365.25)\}$ 
%to embed the prior knowledge that dates at the end of the year should be similar to dates at the start.
Although the weather stations themselves are not distributed on a grid, we have evidently identified significant structure to exploit. %in both the temporal component as well as through multiple responses.

The full grid size is $M{=}11,928,672$, % 56{\times}366{\times}$ $291{\times}2, 
however, over 6.5 million points are missing and 30\% of the remaining points are randomly withheld for testing, giving $N=3,742,547$ training points; an enormous problem for exact GP modelling.

\Cref{tbl:method_comparison} illustrates the results of the exact GP model constructed on the climate dataset using the different techniques outlined.
Model training includes hyperparameter estimation through marginal likelihood maximization and the root mean squared error (RMSE) evaluated on the randomly withheld test set is reported for daily minimum and maximum temperature predictions from the multi-output GP. 
The test error is quite low and is almost constant down the columns as we would expect since the different training methods should result in identical models. 
It is firstly evident that the penalize-gaps technique (PG) is the slowest method considered, even for a small penalty parameter ($\gamma{=}100$).
%A higher penalty would give a more accurate solution but would take even longer to train.
The non-preconditioned ignore-gaps (IG) technique trained about 22\% faster, however, we see that the training time can be further reduced by nearly an order of magnitude using a preconditioner with only $p{=}3000$.
The fill-gaps (FG) technique continues to be our most robust method, nearly halving the training time of any other.

\begin{table}[t]
	\centering
	\begin{tabular}{@{}llll@{}}
		\toprule
		 & Run Time & \multicolumn{2}{c}{RMSE $(^\circ C)$}\\ \cline{3-4}
		&\multicolumn{1}{c}{(hrs)} &Minimum & Maximum \\
		\midrule
		FG 		&\bf 11.5 & 2.02 & 1.45 \\
		IG 		& 173.1   & 2.02 & 1.45\\
		$\text{IG}_{1000}$ 	&37.3 & 2.02& 1.45\\	
		$\text{IG}_{3000}$ 	&19.7 & 2.02& 1.45\\	
		$\text{PG}_{100}$ & 221.5 & 2.02 & 2.02 \\
		\bottomrule
	\end{tabular}
	\caption{Reconstruction time and accuracy of daily maximum and minimum temperatures on the withheld test set using different training techniques for the multi-output GP. %Ontario weather stations between 1950 and 2005 for GP models.
		%The time listed is the total time to reconstruct both the maximum and minimum temperatures.
		$\text{PG}_{\#}$ means the penalty $\gamma{=}\#$ was used and
		$\text{IG}_{\#}$ means a rank $p{=}\#$ preconditioner was used.
		\\ \vspace{\lessfigspace} \mbox{} % this is to decrease space left after the caption
	}
	\label{tbl:method_comparison}
\end{table}

\Cref{fig:moosonee} compares the posterior distribution to the observed data at the Moosonee weather station in 1992.
Moosonee was chosen since it is the farthest from its nearest neighbour, the Smoky Falls weather station which is 171km inland~(see \cref{fig:temp_variance} for a map).
The GP does a very good  reconstruction of the withheld observations, especially considering the evident non-stationary behaviour.
In the shaded region, the posterior distribution is also shown with respect to each output.

\begin{figure}[t]
\includegraphics[width=0.5\textwidth]{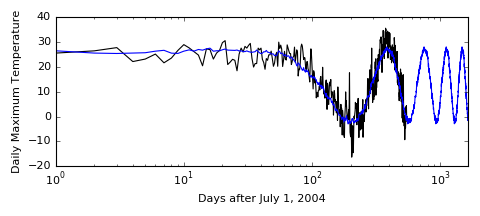}
\caption{Forecast of Toronto maximum daily temperature.
	Actual observations are in black and the posterior mean is in blue.
	Note that the x-axis is on a log scale.
%	For training, data at the Toronto weather station was withheld for all years and 
%	data at all stations after July 1, 2004 was withheld.
	\\ \vspace{\lessfigspace} \mbox{} % this is to decrease space left after the caption
	}
\label{fig:forecast}
\end{figure}

\Cref{fig:forecast} shows a multiple year forecast at the Toronto weather station where it is evident that the posterior mean is quite reasonable.
For training, all data after July 1, 2004 was withheld along with all data at the Toronto weather station back to 1950.
In this way the forecast makes full use of the spatial-temporal correlations,
demonstrating the ability to forecast at a location where there is no historical data.
\Cref{sec:additional_results} provides additional results of the climate study.

\section{Concluding Remarks}
In this paper, we propose two novel and scalable exact GP regression algorithms for
massive datasets on a partial grid. 
Both our algorithms make extensive use of Kronecker matrix algebra
and we present novel preconditioners to accelerate computations. 
The proposed algorithms have modest memory requirements which enable us
to showcase performance on a real-world climate modelling problem with over 3.7~million training points
and on a synthetic video dataset problem with over 1 billion training points.
To the best of our knowledge exact GP inference has not been attempted before on this scale.

Compared to the penalize-gaps strategy of \citet{wilson_gpatt_nips}, both of the developed formulations eliminate the need for any user defined parameters and both of them ultimately decrease the size of the problem to be solved through the use of sparse selection matrices, rather than a penalty approach.
The fill-gaps technique from \cref{thm:FG} demonstrates a particularly interesting formulation wherein the gaps are ``filled-in'' \emph{before} explicitly training the model.
Additionally, both of the two developed techniques, fill-gaps and ignore-gaps, complement each-other elegantly where the first is solving a problem whose size is equal to the number of gaps while the latter is solving a problem size equal to the number of training points.
Thus, an appropriate method can always be selected based upon where the dataset lies in the ``gappiness'' spectrum.
Further, we developed a \nystrom-like preconditioner which is extremely effective at accelerating computations.

Lastly, while the methods discussed assume some dataset structure, we demonstrate through the climate modelling problem that with a little insight, structure can be found in many applications to enable exact GP modelling on massive datasets.

{
	\small
	\paragraph{Acknowledgements:} \hspace{-2.4mm} Research funded by an NSERC Discovery Grant and the Canada Research Chairs~program.
}
\\ \mbox{}\vspace{-5mm}
\dobib
\newpage
\appendix
\section{Ontario Climate Studies Additional Results}
\label{sec:additional_results}
This section provides some additional results from the Ontario climate studies of \cref{sec:climate}.
\Cref{fig:temp_variance} shows the log posterior variance of daily temperatures across Ontario.
Also shown is the scattered distribution of weather stations across the province.
As expected, the posterior variance rises away from the weather stations where the lack of data is reflected in our uncertainty.

\FloatBarrier
\begin{figure}
\centering
\begin{overpic}[width=0.5\textwidth]{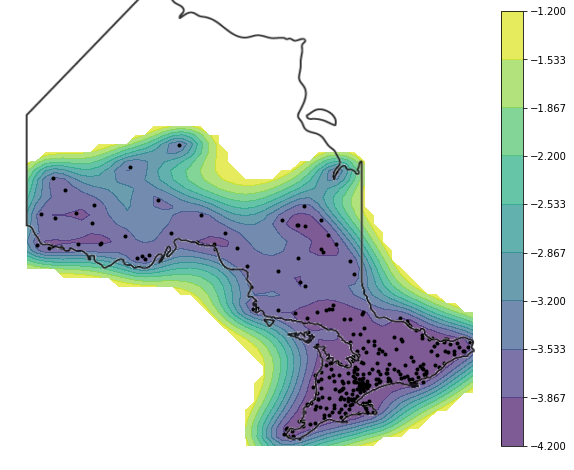} % https://latex.org/forum/viewtopic.php?t=8719
% arrow on moosonee
\put(66,50){\color{black}Moosonee}
\put(65.5,51.5){\color{black}\vector(-1,0){7}}

% line between Moosonee and smoky falls
\put(15,64){\color{black}$\Delta$173m Elevation}
\put(15,70){\color{black}$\Delta$171Km Distance}
\put(57,53){\color{black}\line(-1,-1){5.7}}
\put(41,63){\color{black}\vector(1,-1){13}}
\end{overpic}
\caption{\lesslines Log posterior variance of daily temperatures across Ontario.
Black dots indicate weather station locations.
Moosonee is indicated along with the distance and elevation change between its nearest neighbour, the Smoky Falls weather station.
Temperature units are in degrees Celsius~($\mbox{}^\circ C$).}
\label{fig:temp_variance}
\end{figure}

\Cref{fig:time_kernel} shows the learned temporal kernel for the Ontario climate studies, found by maximum likelihood estimation. 
This stationary and decaying periodic kernel has a spike in correlation once every year indicating that the temperature on a given day is correlated to the same day of year the following years.
On a daily scale, it is evident that the kernel amplitude decays very rapidly suggesting that daily temperature is only correlated a couple days into the future and past.
On a yearly scale, it is evident that the kernel amplitude decays noticeably the first year and then essentially remains constant.
This means that the temperature on a given day in 1950 is correlated to the same day of year 55 years later in 2005.
This is consistent with our understanding of an annual climate.

Note that the kernel in \cref{fig:time_kernel} is very different from the learned temporal kernel presented in \cite{wilson_gpatt_nips} for a climate model constructed using pre-interpolated weather data on a Cartesian grid.
While we initialized our temporal kernel to be similar to the kernel of \citet{wilson_gpatt_nips}, it fit the data poorly and the maximum likelihood procedure rapidly moved towards a different structure.
The kernel in \cref{fig:time_kernel} is also more consistent with our prior understanding of the climate.
Perhaps the kernel presented in \cite{wilson_gpatt_nips} was stuck in a local optimum of their maximum likelihood procedure.

\begin{figure}
\includegraphics[width=0.5\textwidth]{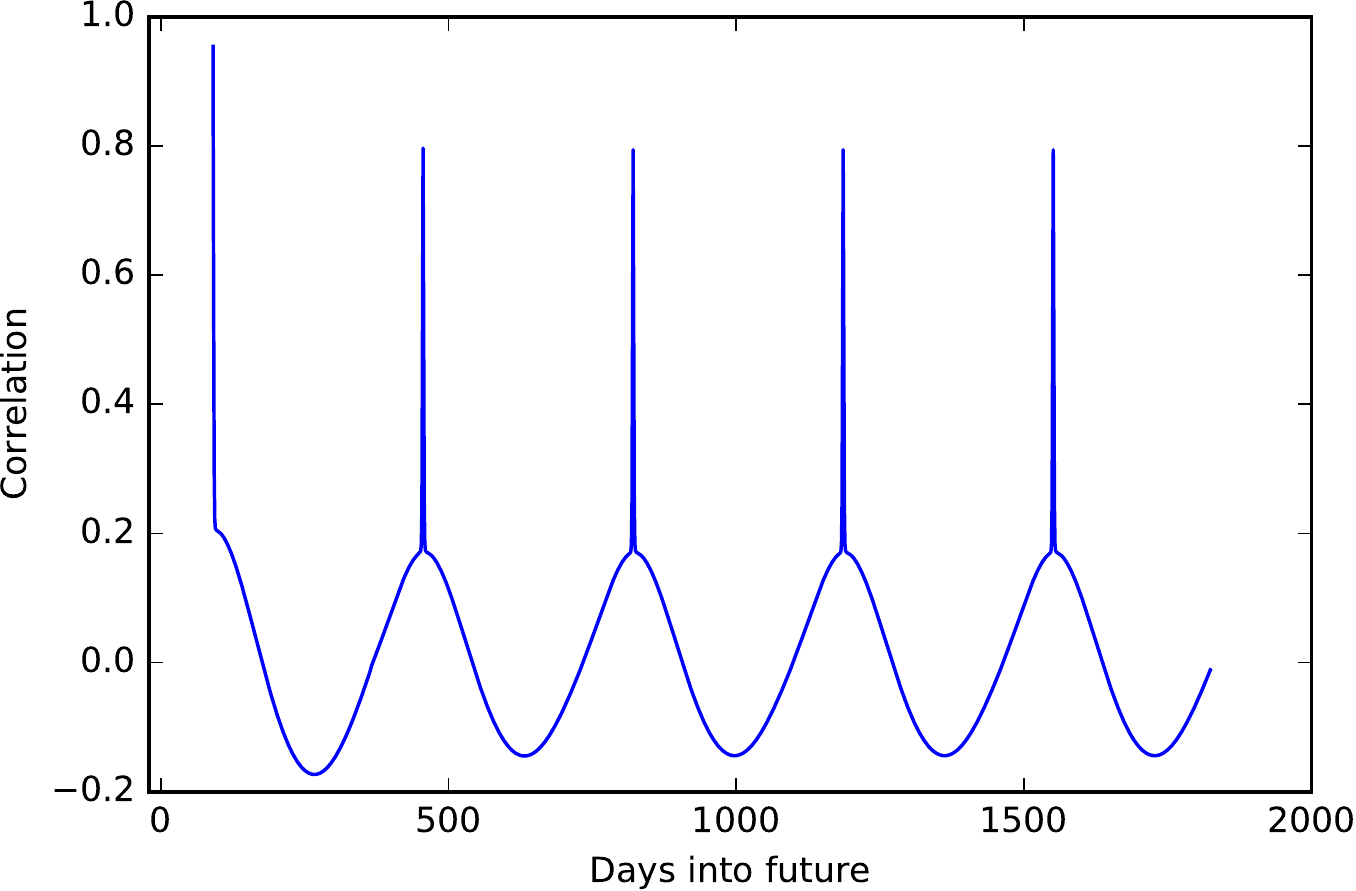}
\caption{Learned temporal kernel for the climate studies of \cref{sec:climate}.
This stationary and decaying periodic kernel has a spike in correlation once every year.
}
\label{fig:time_kernel}
\end{figure}

\section{Source Code}
Source code that implements the methods discussed in the paper along with several tutorials can be found at \url{https://github.com/treforevans/gp_grid}.
All the source code is written in Python.
\end{document}